\newcolumntype{C}[1]{>{\centering\arraybackslash}m{#1}}
\newcommand{\blind}{0}
\newcommand{\trimmedplot}[1]{\includegraphics[width=0.49\linewidth,trim={5mm 1mm 6mm 4.8mm},clip]{#1}}
\newcommand*{\Cdot}{\setbox0\hbox{$x$}\hbox to\wd0{\hss$\cdot$\hss}}
\newcommand*{\bR}{\mathbb{R}}
\newcommand*{\bN}{\mathbb{N}}
\newcommand*{\bE}{\mathbb{E}}
\newcommand*{\bV}{\mathbb{V}}
\newcommand*{\bP}{\mathbb{P}}
\newcommand*{\cN}{\mathcal{N}}
\newcommand*{\Cov}{\mathbb{C}\mathrm{ov}}
\definecolor{darkgreen}{rgb}{0,0.4,0}
\let\oldabs\abs
\def\abs{\@ifstar{\oldabs}{\oldabs*}}
\let\oldnorm\norm
\def\norm{\@ifstar{\oldnorm}{\oldnorm*}}
\newcommand{\FIXME}[1]{}
\def\cN{\mathcal N}
\def\mkx{K_X}
\def\mkz{K_Z}
\newcommand{\param}{\theta}
\newcommand{\targfunc}{f} 
\newcommand{\acc}{\alpha} 
\newcommand{\evid}{\mathcal{E}} 
\newcommand{\trm}{\mu} 
\newcommand{\trma}{\trm_Y} 
\newcommand{\trd}{\rho} 
\newcommand{\trda}{\trd_{Y}} 
\newcommand{\trdae}{\widehat{\trd}_Y} 
\newcommand{\targdim}{d}
\newcommand{\prdunparam}{q} 
\newcommand{\prd}{{\prdunparam}} 
\newcommand{\IS}{S^{\rm IS}} 
\newcommand{\ISe}{\hat S^{\rm IS}} 
\newcommand{\ISs}{\widetilde S^{\rm IS}} 
\newcommand{\nsmp}{K} 
\theoremstyle{plain}
\newtheorem{theorem}{Theorem}[section]
\newtheorem{proposition}[theorem]{Proposition}
\theoremstyle{definition}
\newtheorem{definition}[theorem]{Definition}
\newtheorem{algorithm}[theorem]{Algorithm}
\newtheorem{remark}[theorem]{Remark}
\newtheorem{assumption}[theorem]{Assumption}
\begin{document}

\def\spacingset#1{\renewcommand{\baselinestretch}%
{#1}\small\normalsize} \spacingset{1}


\if0\blind
{
  \title{\bf Markov Chain Importance Sampling --\\ a highly efficient estimator for MCMC}
  \author{
  	Ingmar Schuster\\
  	Zalando Research, Zalando SE, Berlin\\
  	and \\
  	Ilja Klebanov \\
  	Zuse Institute Berlin, Berlin}
  \maketitle
}
\newcommand{\unblindinfo}[2]{#1}

 \fi

\if1\blind
{
  \bigskip
  \bigskip
  \bigskip
  \begin{center}
    {\LARGE\bf Markov Chain Importance Sampling --\\ a highly efficient estimator for MCMC}
\end{center}
  \medskip
  
} 

 \newcommand{\unblindinfo}[2]{\emph{[#2 withheld to not unblind reviewers]}}

\fi

\bigskip
\begin{abstract}
Markov chain (MC) algorithms are ubiquitous in machine learning and statistics and many other disciplines. 
Typically, these algorithms can be formulated as acceptance rejection methods.
In this work we present a novel estimator applicable to these methods, dubbed Markov chain importance sampling (MCIS), which efficiently makes use of rejected proposals.
For the unadjusted Langevin algorithm, it provides a novel way of correcting the discretization error.
Our estimator satisfies a central limit theorem and improves on error per CPU cycle, often to a large extent.
As a by-product it enables estimating the normalizing constant, an important quantity in Bayesian machine learning and statistics.
\end{abstract}

\noindent%
{\it Keywords:}   unadjusted Langevin, discretized Langevin, Monte Carlo, proposal distribution, variance reduction, importance sampling

\spacingset{1.45}

\section{Introduction}
Markov chain Monte Carlo (MCMC) methods, such as 
Metropolis--Hastings \citep[MH;][]{hastings1970monte} or unadjusted Langevin  \citep[ULA;][]{parisi1981correlation,roberts1996exponential} algorithms, constitute a widely used tool to compute expected values $\bE_\trm[\targfunc] = \int_{\bR^d}\targfunc(x)\, \mathrm d\trm(x)$
of a function $\targfunc\in L^1(\trm)$ with respect to a probability measure $\trm$.
MCMC methods can tackle high dimensions as well as scenarios where the probability density of $\trm$ is only accessible up to an unknown normalizing constant $\evid>0$, i.e.\ we are given a density $\trd$ satisfying
\[
\mathrm d\trm (x) = \frac{\trd(x)}{\evid}\, \mathrm dx,
\qquad
\evid = \int_{\bR^d} \rho(x)\, \mathrm dx,
\]
a situation often occurring in Bayesian inverse problems.
MCMC algorithms can be formulated as acceptance rejection methods (not to be confused with accept-reject samplers, compare Chapter 2.3 of \citealt{christian2007monte}):
\begin{algorithm}[generic Markov chain acceptance rejection algorithm]
	\label{alg:AcceptReject}
	\ \\
	Let $\trd$ be a probability density on $\bR^d$ called the \emph{target density}, $Q = (\prd(\Cdot |x))_{x\in\bR^d}$ be a family of probability densities on $\bR^d$ referred to as \emph{proposal densities} and $\acc\colon \bR^{d}\times \bR^{d} \to [0,1]$ be a function, which may depend on $\trd$ and $Q$, called \emph{acceptance probability function}.
	Starting with some initial point $X_1\in\bR^d$, iterate for $k\in\bN$:
	\begin{compactenum}[(i)]
		\item Draw a sample $Y_k\sim\prd(\Cdot|X_k)$, independent of $X_1,\dots,X_{k-1}$ and $Y_1,\dots,Y_{k-1}$.
		\item Compute the acceptance probability $\acc_k = \acc(X_k,Y_k) \in [0,1]$.
		\item Set $X_{k+1} = Y_k$ with probability $\alpha_k$ (``accept'') and $X_{k+1} =X_k$ otherwise (``reject'').
	\end{compactenum}
\end{algorithm}
If the chain evolves without an explicit acceptance step, for example in the unadjusted Langevin algorithm (ULA, sometimes called discretized Langevin algorithm, see \citealt{parisi1981correlation,roberts1996exponential}), it can be rewritten in the form of Algorithm \ref{alg:AcceptReject} by setting $\acc(x,y) = 1$.

The most prominent class of acceptance rejection methods are MH algorithms, where the choice of the acceptance probability function
\begin{equation}
\label{equ:MHaccaptance}
\acc(x,y) = \min\left\{1, \frac{\prd(x|y)\trd(y)}{\prd(y|x)\trd(x)} \right\},
\qquad
x,y\in\bR^{d},
\end{equation}
guarantees $\trm$-reversibility and thereby $\trm$-stationarity of the Markov chain $(X_k)_{k\in\bN}$.
Numerous choices of the proposal density $\prd$ have been suggested in the literature, corresponding to a wide range of MH algorithms such as independent MH, random walk MH, Metropolis adjusted Langevin algorithm (MALA), Hamiltonian Monte Carlo (HMC), Gibbs sampling or the preconditioned Crank--Nicolson algorithm (pCN).

While most other acceptance rejection methods, including all MH algorithms, are designed to sample from the target distribution $\mu$, ULA provides a notable exception by sampling from an approximation $\trm_{X}$ of $\trm$ \citep{durmus2017nonasymptotic}.
We will make the following general assumption concerning the stationary distribution of the Markov chain $(X_k)$:
\begin{assumption}
\label{assumption:MCadmitsUniqueStationaryDensity}
The Markov chain $(X_k)$ in Algorithm \ref{alg:AcceptReject} admits a unique stationary distribution $\trm_{X}$ with probability density $\trd_X$.
\end{assumption}

While the computation of $\alpha_k$ usually depends on the evaluation $\trd(Y_k)$, which typically is the most time-consuming step (e.g.\ for Bayesian posteriors with a costly forward problem, such as solving a high-dimensional partial differential equation), this information is discarded whenever the proposed point $Y_k$ is rejected, resulting in a waste of resources.
This paper suggests a novel approach to remedy this drawback by providing new estimators for $\bE_\trm[\targfunc]$ based on all proposed samples $Y_k$, reweighted in a particular way using self-normalized importance sampling \citep[IS;][Chapter 9]{mcbook}.
This requires the marginal distribution of the samples $Y_k$, which is derived as a key step of our method and has not been studied in previous work to the best of our knowledge.
Note that this methodology does not modify the Markov chain itself in any way (apart from saving the rejected proposals), only the integral estimator is adjusted.

This paper is structured as follows. Section~\ref{sec:MCIS} introduces the Markov chain importance sampling estimator. Its convergence properties are studied in Section~\ref{section:ConvergenceProperties} and related work is discussed in Section~\ref{section:RelatedWork}. Numerical experiments are provided in Section~\ref{section:Numerical} followed by a conclusion.

\section{The Markov chain importance sampling estimator}
\label{sec:MCIS}
The application of IS requires an estimate for the asymptotic distribution $\trma$ of the proposed points $Y_k$, the probability density of which will be denoted by $\trda$.
The approximation $\trdae$ of $\trda$ will be computed using yet another Monte Carlo sum based on the samples $X_k$,
\begin{equation}
\label{equ:HowToEstimateTrda}
\trdae(y)
=
\frac{1}{K}\sum_{k=1}^{K} \prd(y|X_k)
\xrightarrow{K\to\infty}
\trda(y)
=
\int \trd_X(x) \prd(y|x)\, \mathrm dx,
\qquad
y\in\bR^{d}.
\end{equation}
Note that this step does not require any further evaluations of the target density and that $\rho_{Y}$ is a probability density function by the Fubini--Tonelli theorem.

The exact and approximate probability densities $\trda$ and $\trdae$ can now be used to formulate the following self-normalized importance sampling estimates:

\begin{definition}
	Let the chain $(X_k,Y_k)$ be generated by a Markov chain acceptance rejection algorithm (Algorithm \ref{alg:AcceptReject}) and let Assumption \ref{assumption:MCadmitsUniqueStationaryDensity} hold.
	We define the \emph{Markov chain importance sampling} (MCIS) estimators for $\bE_\trm[\targfunc]$ as
	\begin{align}
	\label{equ:generalIS}
	\IS_K(\targfunc)
	\coloneqq
	&\frac{\sum_{k=1}^{K} w(Y_k) \targfunc(Y_k)}{\sum_{k=1}^{K} w(Y_k)},
	\qquad
	w \coloneqq \frac{\trd}{\trda},
	\\
	\label{equ:generalISestimate}
	\ISe_K(\targfunc)
	\coloneqq
	&\frac{\sum_{k=1}^{K} \hat w(Y_k) \targfunc(Y_k)}{\sum_{k=1}^{K} \hat w(Y_k)},
	\qquad
	\hat w \coloneqq \frac{\trd}{\trdae},
	\end{align}
	where $\trda$ and $\trdae$ are given by \eqref{equ:HowToEstimateTrda}.
\end{definition}

While this methodology is of interest to most MH algorithms due to the ability to recycle rejected samples, it is particularly attractive for the unadjusted Langevin algorithm (ULA) by providing consistent estimators.
As mentioned in the introduction, ULA can be rewritten as a Markov chain acceptance rejection method (Algorithm \ref{alg:AcceptReject}) that accepts the proposals with probability $\acc(x,y)\equiv 1$.
Its proposal density
\[
\prd(\Cdot|x) = \mathcal{N}(\, \Cdot \,;\, x + \param \nabla \log \trd(y), 2\param I_d)
\]
results from the Euler-Maruyama discretization of the Langevin stochastic differential equation
\begin{equation}
\label{equ:OverdampedLangevinSDE}
\mathrm d V_t = \nabla \log \trd(V_t)\, \mathrm d t + \sqrt{2}\, \mathrm d B_t,
\end{equation}
the stationary density $\rho_V$ of which coincides with the target density $\evid^{-1}\trd$.
Here, $\param>0$ is a ``small'' time step parameter, $I_d$ denotes the $d$-dimensional identity matrix and $(B_t)$ is the Wiener process (also known as Brownian motion).
The discretization results in a disagreement between the asymptotic density $\trd_X$ of the resulting Markov chain\footnote{The existence of and convergence to $\trd_X$ can be guaranteed under rather mild assumptions using standard results for Markov chains, see e.g.\ \citet[Theorem 13.0.1]{meyn2012markov}.} $(X_k)$ and the target density $\evid^{-1}\trd$.
This disagreement introduces an asymptotic bias of ULA when using the vanilla MCMC estimator, while importance sampling provides consistent estimators (for independent and identically distributed (i.i.d.) samples see \citet[Theorem 9.2]{mcbook}, for the setup presented here, see the law of large numbers in Section \ref{section:LLN_CLT_MCIS}).

\begin{remark}
Note that our approach is practically applicable only if the proposal density $\prd(\Cdot|x)$ can be efficiently evaluated for any $x\in\bR^d$, which seemingly rules out certain algorithms such as HMC.
In the case of HMC this issue can be resolved by considering proposal densities in the phase space $\bR^{2d}$ rather than $\bR^{d}$, resulting in the so-called \emph{Shadow HMC} method \citep{izaguirre2004shadow} as a special case of MCIS, see the discussion in Section \ref{section:RelatedWork}.
\end{remark}

\begin{remark}
	\label{remark:Marginal proposal density form}
To get an intuition for $\trda$ in equation \eqref{equ:HowToEstimateTrda}, consider the standard case of Gaussian random walk proposals centered at the current Markov chain state.
In this case, $\trda$ is simply the convolution of $\evid^{-1}\trd$ with the Gaussian kernel, see Figure~\ref{fig:Marginal_proposal_density_comic} for an illustration.
 In fact, in many cases, $\trda$ will be a convolution of $\evid^{-1}\trd$ with some kernel and thereby have heavier tails than $\trd$ under mild conditions on $\trd$.
For the independent MH algorithm, where the proposal distribution does not depend on the current state, $\prd(\Cdot|x) = \prd$ for all $x\in\bR^d$, we obviously have $\trda = \prd$ \citep{tierney1994markov} and our algorithm is equivalent to standard importance sampling with proposal density $\prd$.
In most cases of interest, however, $\trda$ will not be available in closed form and we will have to rely on the Monte Carlo approximation given in \eqref{equ:HowToEstimateTrda}.
\end{remark}
In summary, the MCIS estimate \eqref{equ:generalISestimate} is applicable to all Markov chain acceptance rejection methods for which the proposal density $\prd$ can be evaluated efficiently.
Due to the use of importance sampling, it is consistent even if the asymptotic density $\trd_X$ of the Markov chain $(X_k)$ does not agree with the target density $\trd$ (the speed of convergence will, however, strongly depend on $\trd_X$).
It thereby provides a remedy for discarding the rejected samples in most MH algorithms (sometimes called \emph{waste recycling}) and compensates for the asymptotic bias of ULA.
\begin{figure}
	\centering
	\includegraphics[width=0.45\textwidth,trim={1.5mm 5mm 5mm 2.5mm},clip]{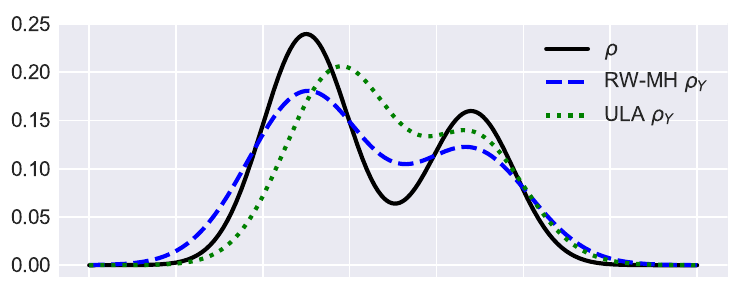}
	\hspace{8mm}
	\includegraphics[width=0.45\textwidth,trim={1.5mm 5mm 5mm 2.5mm},clip]{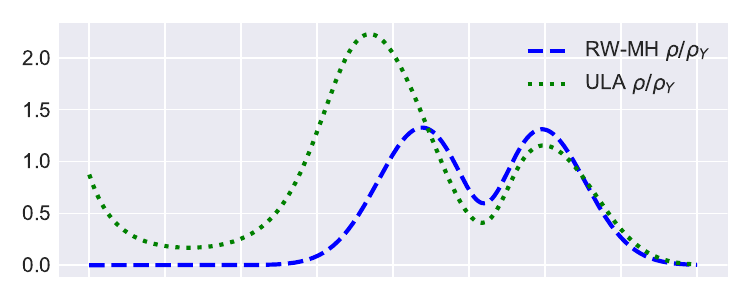}
	\caption{\emph{Left:} Marginal proposal density for Gaussian random walk MH  and ULA. \emph{Right:} Resulting importance weights. Parameters were chosen to match Gaussian proposal noise in both sampling schemes.}
	\label{fig:Marginal_proposal_density_comic}
\end{figure}

\begin{remark}
\label{remark:SMCIS}
The asymptotic density of the proposed points $Y_k$ is approximated by the MCMC estimate \eqref{equ:HowToEstimateTrda}  which is based on \emph{all} accepted points $X_k$. This results in a runtime quadratic in the number of samples $K$, see Appendix~\ref{sec:runtime}.
Computationally more efficient approximations relying only on subsets of the samples constitute further implementations of MCIS. We will refer to the case where the estimator relies only on a single sample,
\[
\ISs_K(\targfunc)
\coloneqq
\frac{\sum_{k=1}^{K} \widetilde w_k \targfunc(Y_k)}{\sum_{k=1}^{K} \widetilde w_k},
\qquad
\tilde w_k \coloneqq \frac{\trd(Y_k)}{q(Y_k|X_k)},
\]
as Single-MCIS (S-MCIS) and include this in our experimental results for reference.
The S-MCIS estimator is studied in depth in independent work \citep{Rudolf2018}. It satisfies, like MCIS, a law of large numbers and a central limit theorem. Furthermore, \cite{Rudolf2018} show that its asymptotic variance does not depend on the autocorrelation of the Markov chain.
\end{remark}

\begin{remark}
	While $\trd$ is an unnormalized density, $\trd_{Y}$ is, by definition, a probability density function.
	As a consequence, we make the important observation that $\frac{1}{\nsmp}\sum_{k=1}^{\nsmp}  w(Y_k)$ is a consistent estimator for the normalizing constant $ \evid = \int_{\bR^d} \rho(x)\, \mathrm dx$, which in a Bayesian context is also called model evidence or the marginal likelihood of the model.	
	The normalizing constant is an essential quantity for Bayesian model selection, model averaging and testing.
	Our estimator refutes the folk theorem that it is hard to estimate with MCMC generated samples from the original Markov chain alone.
	This might obviate the need for specialized estimators for the normalizing constant or even specialized sampling schemes for model choice such as \cite{Chib1995,Carlin1995}.
\end{remark}

\section{Convergence properties of the MCIS estimator}
\label{section:ConvergenceProperties}
In this section we will take a closer look at the convergence properties of MCIS.
We derive inheritance of geometric and uniform ergodicity for the augmented chain $\left((X_k,Y_k)\right)_{k\in\bN}$ which allows us to derive a law of large numbers (LLN) and a central limit theorem (CLT) for the MCIS estimator $\IS_K$ in Section \ref{section:LLN_CLT_MCIS}.
Its application to MH and ULA is discussed in Section \ref{section:MHandULAfulfillConditions}.
In Section \ref{section:AnalysisISe_nonrigorous} we will give a nonrigorous discussion of why the estimator $\ISe_K$ is meaningful and even outperforms $\IS_K$ in many cases.

\subsection{The augmented chain $(Z_k)_{k\in\bN}$}
The MCIS estimators \eqref{equ:generalIS} and \eqref{equ:generalISestimate} for $\bE_\mu[f]$ are based on the proposals $Y_k,\ k\in\bN$, of the Markov chain acceptance rejection algorithm.
Note, however, that the chain $(Y_k)$ is, in general, not a Markov chain.
Therefore, our analysis will rely on the augmented Markov chain $(Z_k)_{k\in\bN}$ given by
\begin{equation}
\label{equ:AugmentedChain}
Z_k = (X_k,Y_k) \in\bR^{2\targdim}.
\end{equation}
This raises the question which properties of the original chain $(X_k)$ are inherited by the augmented chain $(Z_k)$.
Denoting the transition kernel of  $(X_k)_{k\in\bN}$ by
\[
\mkx\colon \bR^d\times \mathcal B(\bR^d)\to [0,1],
\qquad
\mkx(x,A) = \bP\left[X_{k+1}\in A\, |\, X_k=x\right],
\]
and the one of $(Z_k)_{k\in\bN}$ by $\mkz\colon \bR^{2d}\times \mathcal B(\bR^{2d})\to [0,1]$, we first make the following crucial observation.
Since $X_{k+1}$ is either equal to $X_k$ (with probability $1-\alpha_k$) or to $Y_k$ (with probability $\alpha_k$),
the distribution of $X_{k+1}$ given $Z_k=(X_k,Y_k)$ is a \emph{discrete} distribution with probability mass distributed between the two points $X_k$ and $Y_k$
resulting in a ``degenerate''\footnote{By degenerate we mean here that it is only supported on the two red lines visualized in Figure \ref{fig:AugmentedTransitionKernel}, even if $\prd(\cdot|x)$ is globally supported for all $x$.} transition kernel $\mkz$:
\[
\mkz((x,y),A\times B) = (1-\alpha(x,y))\, \mathds{1}_{A}(x)\,  \prd(B|x) + \alpha(x,y)\, \mathds{1}_{A}(y)\,  \prd(B|y),
\]
where $\mathds{1}_A$ denotes the indicator function of a set $A$ and $\prd(A|x) \coloneqq \int_A \prd(x'|x)\, \mathrm dx'$ is a slight abuse of notation.
\begin{figure}[H]
	\centering
	\includegraphics[width=0.5\textwidth]{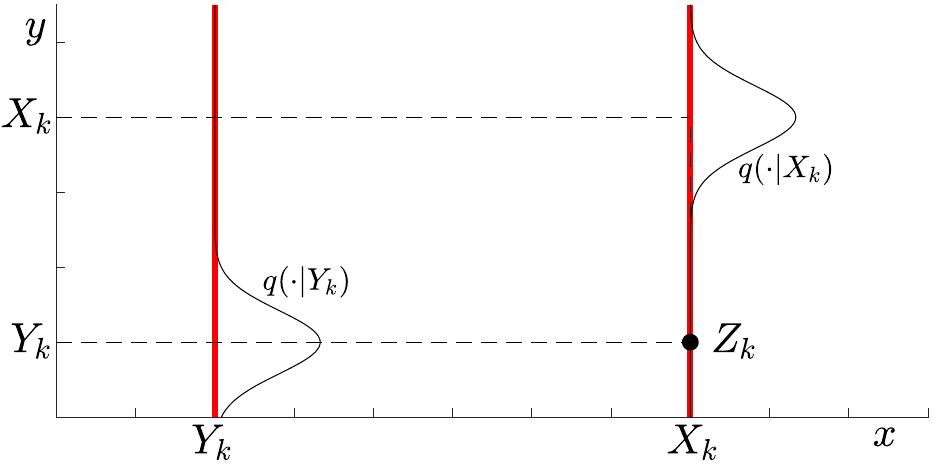}
	\caption{Since $X_{k+1}$ is either equal to $X_k$ or to $Y_k$, the transition kernel $\mkz((X_k,Y_k),\Cdot)$ is degenerate meaning that its support is given by the two red lines.}
	\label{fig:AugmentedTransitionKernel}
\end{figure}
This observation implies that the equality $\mkz((x,y),\Cdot) = \mkz((x',y'),\Cdot)$ can only hold if [$x=x'$ and $y=y'$] or [$x=y'$ and $y=x'$], excluding the existence of accessible atoms of $(Z_k)$.
Similarly, reversibility is never satisfiable for the chain $(Z_k)$.\\
However, while $\mkz^m$ is degenerate for $m=1$, it is globally supported for $m\ge 2$.
Therefore, while unable to retain the properties of the (one-step) transition kernel, the augmented chain still inherits many of those properties of $(X_k)$ which rely on the asymptotic behavior of the kernel as stated in the following theorem.

\begin{restatable}{theorem}{TheoremInheritedProperties}
	\label{theorem:InheritedProperties}
	Let the processes $(X_k)_{k\in\bN},\ (Y_k)_{k\in\bN}$ be given by a Markov chain acceptance rejection algorithm (Algorithm \ref{alg:AcceptReject}).
	Then the augmented chain $(Z_k)_{k\in\bN} = (X_k,Y_k)_{k\in\bN}$ given by \eqref{equ:AugmentedChain} has the following properties.
	\begin{compactenum}[(i)]
		\item \label{theorem:InheritedProperties:stationary}
		If $(X_k)$ has a stationary distribution $\mu_X$ with density $\trd_X$, then $(Z_k)$ has the stationary distribution $\mu_Z$ with density $\trd_Z(x,y) = \trd_X(x)\, \prd(y|x)$.
		\item \label{theorem:InheritedProperties:aperiodicity}
		Let the proposal densities $\prd(\Cdot|\Cdot)$ be globally supported and continuous in both arguments.
		If $(X_k)$ is aperiodic, irreducible and/or Harris positive, so is $(Z_k)$.
		\item \label{theorem:InheritedProperties:geom ergodicity}
		If $(X_k)$ is geometrically ergodic, so is $(Z_k)$.
		\item \label{theorem:InheritedProperties:uniform ergodicity}
		If $(X_k)$ is uniformly ergodic, so is $(Z_k)$.
	\end{compactenum}
\end{restatable}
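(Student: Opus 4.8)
The plan is to exploit a product structure that relates the augmented kernel $\mkz$ to the marginal kernel $\mkx$. I introduce two auxiliary Markov kernels: the \emph{selection} kernel $R\colon\bR^{2d}\to\bR^{d}$ given by $R((x,y),\Cdot)=(1-\acc(x,y))\,\delta_x+\acc(x,y)\,\delta_y$, which produces $X_{k+1}$ from $Z_k=(x,y)$, and the \emph{augmentation} kernel $\Lambda\colon\bR^{d}\to\bR^{2d}$ given by $\Lambda(x,A\times B)=\mathds 1_A(x)\,\prd(B|x)$, which appends a fresh proposal. Unwinding one step of Algorithm~\ref{alg:AcceptReject} then yields the two factorizations
\[
\mkz = R\Lambda, \qquad \mkx = \Lambda R,
\]
so that $\mkz$ and $\mkx$ are the two orders of the \emph{same} pair of kernels. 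Iterating gives the identity $\mkz^{n+1}=R\,\mkx^{n}\,\Lambda$, which I use throughout.

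With this structure, parts (i), (iii) and (iv) are short. For (i) I set $\mu_Z=\mu_X\Lambda$; by definition of $\Lambda$ its density is $\trd_X(x)\,\prd(y|x)=\trd_Z(x,y)$, and stationarity follows from $\mu_Z\mkz=\mu_X\Lambda R\Lambda=\mu_X\mkx\Lambda=\mu_X\Lambda=\mu_Z$, using only $\mu_X\mkx=\mu_X$. For (iii) and (iv) I push the convergence of $\mkx^{n}$ through $\Lambda$. Since $\Lambda$ is a Markov kernel it is a contraction in total variation, and $\mu_Z=(\mu_X\mkx^{n})\Lambda$ because $\mu_X$ is invariant, so
\[
\| \mkz^{n+1}(z,\Cdot)-\mu_Z \|_{\mathrm{TV}}
\le \| R(z,\Cdot)\mkx^{n}-\mu_X \|_{\mathrm{TV}}
\le \int R(z,\mathrm dx')\, \| \mkx^{n}(x',\Cdot)-\mu_X \|_{\mathrm{TV}}.
\]
Taking the supremum over $x'$ transfers uniform ergodicity to $(Z_k)$; in the geometric case, inserting $\| \mkx^{n}(x',\Cdot)-\mu_X \|_{\mathrm{TV}}\le C(x')\,\theta^{n}$ together with $R(z,\Cdot)=(1-\acc(x,y))\delta_x+\acc(x,y)\delta_y$ gives $\| \mkz^{n+1}(z,\Cdot)-\mu_Z \|_{\mathrm{TV}}\le (C(x)+C(y))\,\theta^{n}$, i.e. geometric ergodicity with Lyapunov constant $C(x)+C(y)$.

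The remaining part (ii) is where the real work lies, because irreducibility, aperiodicity and Harris recurrence are support- and recurrence-properties rather than rate bounds, and the one-step kernel $\mkz$ is degenerate (supported only on the two lines of Figure~\ref{fig:AugmentedTransitionKernel}). Here I would use the global support of the proposals: $\Lambda(x,\Cdot)$ has a strictly positive density in its second block for every $x$, so $\mkz^{2}=R\Lambda R\Lambda$ is globally supported. For $\psi$-irreducibility I set $\psi=\phi\Lambda$, with $\phi$ an irreducibility measure of $(X_k)$; then $\psi(C)>0$ forces $\phi$-positivity of $A_C\coloneqq\{x':\Lambda(x',C)>0\}$, and via $\mkz^{n+1}=R\,\mkx^{n}\,\Lambda$ the reachability of $C$ from any $z=(x,y)$ reduces to reachability of $A_C$ by the $X$-chain started from the law $R(z,\Cdot)$, which holds since $(X_k)$ is $\phi$-irreducible. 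Aperiodicity then follows from the positive density of $\mkz^{2}$, which rules out a cyclic decomposition. For positive Harris recurrence I combine the invariant probability $\mu_Z$ from (i) with recurrence of the $X$-coordinate: $(X_k)$ visits $A_C$ infinitely often, and conditionally on the whole $X$-path the proposals $Y_k$ are independent with $\prd(\Cdot\mid X_k)$ bounded below on the relevant sets, so a conditional Borel--Cantelli argument forces $Z_k\in C$ infinitely often almost surely. I expect this lifting of recurrence from the $X$-marginal to the degenerate augmented chain to be the main obstacle, and the global-support hypothesis on $\prd$ to be exactly what is needed to overcome it.
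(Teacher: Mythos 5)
Your proof is correct and is essentially the paper's own argument in cleaner packaging: the paper's operation $\nu \odot \prd$ is exactly post-composition with your kernel $\Lambda$, its key display $\mkz^m((x,y),\Cdot) = \left[(1-\acc)\mkx^{m-1}(x,\Cdot)+\acc\,\mkx^{m-1}(y,\Cdot)\right]\odot \prd$ is your identity $\mkz^{m} = R\,\mkx^{m-1}\Lambda$ written out pointwise, and the total-variation isometry the paper proves for $\nu\mapsto\nu\odot\prd$ plays precisely the role of your total-variation contraction of $\Lambda$ in parts (iii) and (iv), while your $\mu_Z=\mu_X\Lambda$ computation is the paper's part (i). Part (ii) also matches: the paper is equally terse on irreducibility and aperiodicity (your appeal to global support of $\mkz^2$, like the paper's main-text remark, tacitly needs $\acc>0$ almost everywhere, as holds for the MH choice \eqref{equ:MHaccaptance} with positive densities), and your conditional Borel--Cantelli step for Harris recurrence is the paper's ``visit $A_X$ infinitely often, propose into $A_Y$ with probability at least $\epsilon$'' argument, where the paper secures the uniform lower bound via the continuity-based rectangle $A_X\times A_Y\subseteq A$ in \eqref{equ:RectangleConstruction}.
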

The proof of the theorem is given in Appendix~\ref{sec:proofs}.

\subsection{Law of large numbers and central limit theorem }
\label{section:LLN_CLT_MCIS}
Since the analysis of the ``practical'' MCIS estimator \eqref{equ:generalISestimate} appears very intricate due to the interdependence between each pair of the points $Y_k$ via the density $\trdae$, we will restrict the analysis to the estimator \eqref{equ:generalIS}, which is typically not implementable in practice.
A discussion of the convergence properties of estimator \eqref{equ:generalISestimate} and why it can outperform \eqref{equ:generalIS} is given in Section \ref{section:AnalysisISe_nonrigorous}.

\begin{restatable}[LLN for MCIS]{theorem}{TheoremLLN}
	\label{theorem:LLN}	
	Let $\trm$ be a probability measure on $\bR^{d}$, $\trd\colon \bR^{d} \to \bR$ be proportional to the probability density function of $\trm$ and $\targfunc\in L^1(\trm)$.
	Let the processes $(X_k)_{k\in\bN},\ (Y_k)_{k\in\bN}$ be given by a Markov chain acceptance rejection algorithm (Algorithm \ref{alg:AcceptReject}) with globally supported proposal densities $\prd(\Cdot|\Cdot)$ that are continuous in both arguments.
	Let the Markov chain $(X_k)$ be aperiodic, irreducible, Harris positive and fulfill Assumption \ref{assumption:MCadmitsUniqueStationaryDensity}.
	Then the MCIS estimator $\IS_K$ given by \eqref{equ:generalIS} fulfills the law of large numbers (LLN):
	\[
	\IS_K(\targfunc)
	\xrightarrow{\rm a.s.}
	\bE_\trm[\targfunc].
	\]
\end{restatable}
The proof of the theorem is given in Appendix~\ref{sec:proofs}.

\begin{restatable}[CLT for MCIS]{theorem}{TheoremCLT}
	\label{theorem:CLT}
	Let the assumptions of Theorem \ref{theorem:LLN} hold, $(Z_k)_{k\in\bN} = ((X_k,Y_k))_{k\in\bN}$ be the augmented chain \eqref{equ:AugmentedChain} and
	\[
	h\colon \bR^{2d}\to\bR^2,
	\qquad
	h(z) = h(x,y) = (\phi(y),w(y))^\top,
	\qquad
	\phi \coloneqq  \frac{\targfunc\trd}{\trda},
	\quad
	w \coloneqq  \frac{\trd}{\trda},
	\]
	where $\trda$ is given by \eqref{equ:HowToEstimateTrda}.
	Further, let the following assumptions hold:
	\begin{enumerate}[(i)]
		\item $(X_k)$ is geometrically ergodic.
		\item $\bE_{Y\sim\trda} \left[ |\phi(Y)|^{2+\epsilon}\right]<\infty$, $\bE_{Y\sim\trda} \left[ |w(Y)|^{2+\epsilon}\right]<\infty$ for some $\epsilon>0$.
	\end{enumerate}
	Then $\|\gamma_h\|<\infty$,  where
	\[
	\gamma_{h} \coloneqq  \frac{\gamma_{h}^{(1)}}{2} + \sum_{k=2}^\infty \gamma_{h}^{(k)},
	\qquad
	\gamma_{h}^{(k)}\coloneqq  \Cov_{X_1\sim\trd} \left[h(Z_1),h(Z_{k})\right] + \Cov_{X_1\sim\trd} \left[h(Z_k),h(Z_{1})\right],
	\]	
	and the MCIS estimator $\IS_K$ given by \eqref{equ:generalIS} fulfills the following central limit theorem (CLT):
	\[
	\sqrt{K}\left(
	\IS_K(\targfunc) - \bE_\trm[\targfunc]
	\right)
	\xrightarrow{\rm d}
	\cN\left(0,\gamma_{\rm CLT}\right),
	\qquad
	\gamma_{\rm CLT}
	\coloneqq  \evid^{-2}
	{\begin{bsmallmatrix} 1 \\ -\bE_\trm[\targfunc] \end{bsmallmatrix}}^{\top} \gamma_h \, \begin{bsmallmatrix} 1 \\ -\bE_\trm[\targfunc] \end{bsmallmatrix}.
	\]	
\end{restatable}
The proof of the theorem is given in Appendix~\ref{sec:proofs}.


\subsection{Application to MH and ULA}
\label{section:MHandULAfulfillConditions}
In order to apply Theorem \ref{theorem:CLT} to specific algorithms such as MH and ULA, we need to verify its assumptions.
Note that the assumption on $(X_k)$ to be aperiodic, irreducible and Harris positive is rather weak.
For MH, if the proposal densities $q(\Cdot | x)$ are globally supported for all $x\in\bR^d$, aperiodicity and irreducibility are straightforward while Harris positivity follows from Lemma 7.3 of \citet{christian2007monte}.
For ULA, the proposal densities are globally supported by construction, which implies aperiodicity and irreducibility.
Harris positivity can be obtained under rather mild conditions similar to the ones ensuring existence of and convergence to the stationary density $\trd_X$, see \citet[Theorem 13.0.1]{meyn2012markov}.
Geometric ergodicity is a stronger assumption and is typically derived by the use of drift criteria \citep[Theorem 15.0.1]{meyn2012markov}.
It has been established under certain conditions for both MH (see \citealt{mengersen1996rates}, \citealt{roberts1996geometric},  Chapter 7 of \citealt{christian2007monte}) and ULA  \citep[see][]{roberts1996exponential,roberts2002langevin,durmus2017nonasymptotic}.
Note that for the analysis of the convergence properties of ULA we do not actually need to consider the augmented chain, since $(Y_k)_{k\in\bN} = (X_{k+1})_{k\in\bN}$ by construction.

\subsection{Convergence of $\ISe_K$ and comparison to LAIS}
\label{section:AnalysisISe_nonrigorous}
In Section~\ref{section:LLN_CLT_MCIS} we analyzed the convergence properties of the MCIS estimator $\IS_K$, since the practically relevant estimator $\ISe_K$ appeared too intricate due to the interdependence of the points $Y_k$.
In this section, we compare the approximation properties of $\ISe_K$ to those of stratified sampling and of layered adaptive importance sampling (LAIS; \citealt{Martino2015}, see Section~\ref{section:RelatedWork}).
We hope that this gives some insight into the way that $\ISe_K$ operates and paves the way to a rigorous mathematical analysis of $\ISe_K$.
However, our attempts to establish such a theory have failed until now.

So far, $\ISe_K$ has only been considered as an approximation to $\IS_K$.
In many experiments however, $\ISe_K$ performs considerably better than $\IS_K$ and very similar to LAIS, while the target density is evaluated only half as often as for LAIS.
This suggests to analyze the estimation properties of $\ISe_K(\targfunc) \approx \bE_\trm[\targfunc]$ directly rather than viewing $\ISe_K$ as an approximation to $\IS_K$.
For this purpose, it is meaningful to compare the samples $Y_k$ to random variables
\[
Z_k \sim \prd(\Cdot|X_k),
\qquad
k=1,\dots,K,
\]
that are sampled independently \emph{after} the Markov chain $(X_k)$ has been established (i.e.\ $Z_1,\dots,Z_K$ are conditionally independent given $X_1,\dots,X_K$).
If we consider $X_1,\dots,X_K$ fixed, $Z_1,\dots,Z_K$ are thereby \emph{stratified} samples (\citeauthor{rubinstein2016simulation}, \citeyear{rubinstein2016simulation}, Chapter 5.5; \citeauthor{mcbook}, \citeyear{mcbook}, Chapter 8.4) from the mixture distribution
\[
\trdae = K^{-1} \sum_{k=1}^{K} \prd(\Cdot|X_k).
\]
This has three advantages:
\begin{itemize}
	\item The samples are drawn from $\trdae$ and not $\trda$, allowing a straightforward application of importance sampling. No approximation of the importance density is necessary.
	\item Stratified samples are provably better than independent samples from $\trdae$ in terms of the variance of the corresponding Monte Carlo estimator \citep[Proposition~5.5.1]{rubinstein2016simulation}.
	\item The samples are actually drawn directly from $\trdae$ without any asymptotic arguments or Markov chains involved.
	The only reason to use the underlying Markov chain $(X_k)$ at all (rather than some arbitrary points $X_k$) is to obtain a ``good'' importance sampling density $\trdae$.
\end{itemize}
Note that $\ISe_K$ based on the samples $Z_k$ in place of $Y_k$ is precisely (a special case of) the LAIS estimator.
Further, the samples $Y_k$ behave very similar to $Z_k$. Each $Y_k$ is a sample from $\prd(\Cdot|X_k)$, only the conditional independence given $(X_k)_{k=1,\dots,K}$ is violated.
This lack of conditional independence is what makes the estimator $\ISe_K$ so difficult to analyze.
However, the conditional dependence of $Y_k$ and $Y_\ell$ can be expected to decline as $|k-\ell|$ grows, since the autocorrelation of the Markov chain is decreasing under certain assumptions, presenting a promising direction for the analysis of $\ISe_K$ that is supported by numerical experiments (compare $\ISe_K$ to LAIS in Figures~\ref{fig:clt}, \ref{fig:MHIS_MoG_GP}, \ref{fig:MCIS_increasing_scale_experiment} and \ref{fig:MHIS_MoG_GP additional}

\section{Related work}
\label{section:RelatedWork}
Previous work derives the density of rejected proposals $\trda^r$ and accepted proposals $\trda^a$ and analyzes the variance reduction from a Rao--Blackwellization perspective.
Note that $\trda$ is related to these densities through the mixture representation $\trda = \acc\trda^a +(1-\acc) \trda^r$, where $\acc$ is the overall acceptance rate.
Using an estimate of $\trda^r$, \cite{casella1996rao} experimentally demonstrate that a subsequent importance sampling estimator for the integral of interest has strongly decreased variance. 
The same paper also proposes the S-MCIS estimator under the assumption that the normalization constant of $\trd$ is known \citep[][ Section 5, estimator $\hat{\tau}_9$]{casella1996rao}.
Alternatively, \cite{douc2011vanilla} use an estimate of $\trda^a$ which reduces the computational cost as compared to the $\trda^r$ based on the approach of \cite{casella1996rao}.
Again, the resulting variance reduction is demonstrated experimentally.
Also, \cite{douc2011vanilla} note the existance of test-function independent control variates. We explore a similar approach in Appendix~\ref{section:ControlVariates}.
\\
The shadow Hamiltonian Monte Carlo (SHMC) approach \citep{izaguirre2004shadow}, little known in machine learning and statistics, is a special case of MCIS.
In particular, it relies on the observation that HMC without an MH correction does not sample from the desired Hamiltonian density $\trd^{h}$ but rather from a modified density $\trda^{h}$.
Again, the idea is to approximate $\trda^{h}$ and use the MCIS estimator.
Like MCIS in general, SHMC is known to strongly outperform standard HMC.\\
Gradient importance sampling \citep[GRIS,][]{schuster2015gradient} is closely related to S-MCIS.
It uses an unadjusted Langevin sampling scheme together with an adaptation of the covariance matrix used for Gaussian proposals.
The weighting for the estimator only uses a single time step.
The main difference is that GRIS introduces a novel type of proposal adaptation scheme for the well-known population Monte Carlo sampling-importance-resampling scheme and estimator (PMC, \cite{cappe2004population}), while MCIS represents a new type of estimator for unmodified Markov chain algorithms.\
Markov adaptive multiple importance sampling \cite[MAMIS,][]{martino2015mcmc} is a sampling scheme related to PMC. It uses a set of samples (called particles), but uses MCMC evolution of the particles unlike PMC, which employs a sampling-importance-resampling scheme.
\\
Adaptive multiple importance sampling \cite[AMIS,][]{cornuet2012adaptive,marin2012consistency} is an algorithm and estimator in the population Monte Carlo family which uses a weighting of samples with a mixture very similar to the MCIS estimator $\trdae$.
The approach taken in AMIS is that of Rao--Blackwellization of a PMC estimator.
However, the AMIS consistency proof is tailored to PMC algorithms   \citep{marin2012consistency} and does not apply to our Markov chain case.
\\
Layered adaptive importance sampling \cite[LAIS,][]{Martino2015} suggests to first use the MH algorithm as a way of producing samples approximately from the target.
In a second step, LAIS builds a mixture distribution based on the MH samples and draws a new set of samples which are corrected with importance sampling for final integral estimation.
The component distributions of the mixture do not need to be the same as the proposal distributions used for MH.
While the LAIS estimator has the same form as the MCIS estimator, its two-stage approach necessitates new evaluations of the target density in the importance sampling stage, whereas MCIS recycles the proposed points $Y_k$ instead of drawing new samples and thereby avoids additional target evaluations. The computation time saved this way comes at the cost of an additional interdependence of the points $Y_k$ as discussed in Section \ref{section:AnalysisISe_nonrigorous}, which makes the convergence properties of the MCIS estimator $\ISe_K$ more difficult to analyze.
\\
The method suggested by \cite{yuan2013novel} uses the same estimator as LAIS and MCIS without deriving its correctness. The proposed sampling method is a modified Metropolis--Hastings scheme, where several samples are generated from the proposal distribution at each step of MH and are candidates for the next state of the Markov Chain. However, possible advantages of the proposed scheme are not discussed.
\\
\cite{botev2013markov} discuss an approach also termed Markov Chain importance sampling, which however is markedly different from ours.
They first generate samples following the \emph{optimal} importance sampling density $\trd^\ast \propto \trd \, |f|$ using MCMC and fit a density model $\widehat{\trd^\ast}$ to those. New samples generated from $\widehat{\trd^\ast}$ using ordinary Monte Carlo are then used for standard IS. Our approach uses unaltered MCMC samples with a novel IS estimator.
\\
Finally, the waste recycling scheme for MCMC also aims to use rejected samples to derive a more efficient estimatior \citep{frenkel2004speed,delmas2009does}.
However, this is accomplished by analytically marginalizing out the probability of a proposal being accepted, i.e.\ taking the weighted average of proposal and current point, with weights given by the acceptance probability.
It was shown that this can be seen as a control variate approach \citep{delmas2009does}, and that an optimal control variate coefficient is available.
MCIS, on the other hand, derives the marginal distribution of the proposals, which is a fundamentally different approach.

\section{Numerical Experiments}
\label{section:Numerical}

This section contains the main experiments. We discuss the effects of proposal scaling of the different estimators  in Appendix~\ref{sec:Effects of proposal scaling}. Runtime of the MCIS and MCMC estimators is compared in Appendix~\ref{sec:runtime}.\\
We considered two artificial target distributions and one arising from the Bayesian posterior of a Gaussian process (GP) regression model. The samplers were tuned to yield acceptance rate reported in the literature as optimal for the vanilla MCMC estimator.
Further performance gains can be expected from tuning the acceptance rate for MCIS.

For each reported experiment, we ran the MCMC algorithm for $10\,000$ iterations and repeated the sampling  procedure $20$ times with different random seeds. The test function was given by
$f(x) = \targdim^{-1}\sum_{i=1}^{\targdim} \varphi(x_i)$ where $\varphi(z) = z^3$ and $x_i$ is the $i$th of $d$ overall dimensions in $x$. For demonstrating the CLT, we plot logarithmic mean standard deviation vs. CPU time. For demonstrating convergence, we plot mean logarithmic absolute error vs. CPU time. Means are taken with a sliding window after pooling the measurements from all repeated runs.
We obtained similar results for further test functions, namely for $\varphi(z) = z$, $\varphi(z) = z^2$ and $\varphi(z) = \exp(z)$, and provide the plots for those in the appendix.
The reported CPU time is the sum of sampling and estimate computation times.

The central limit theorem for MCIS using Markov chains induced by both Langevin and Metropolis--Hastings is demonstrated in Figure~\ref{fig:clt}, which plots the standard deviation of the estimators as a function of computation time.

\begin{figure}[htb]
	\centering
	\includegraphics[width=0.49\textwidth]{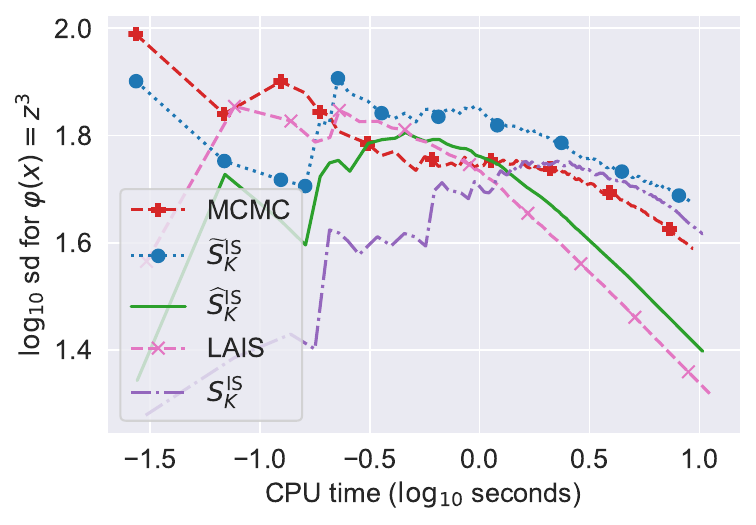}
	\includegraphics[width=0.49\textwidth]{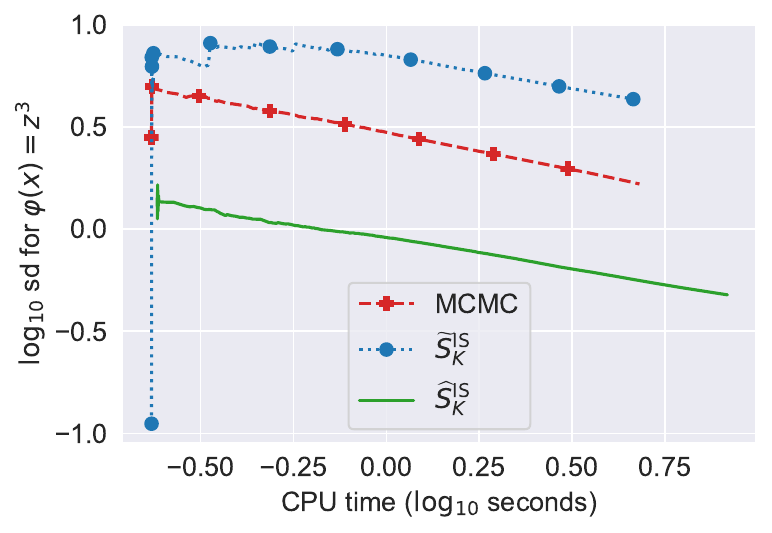}
	\caption{ Standard deviation over CPU time for estimating $\mathbb{E}[f]$, $f(x)= \targdim^{-1}\sum_{i=1}^{\targdim} x_i^3$. \emph{Left:} Gaussian random walk MH, mixture of Gaussians target. \emph{Right:} Langevin algorithm with Gaussian target.}
	\label{fig:clt}
\end{figure}

What follows is a description of all target distributions we used in the experimental section.

\paragraph{Gaussian target.} This unimodal Gaussian distribution is isotropic where each dimension follows the law $\mathcal{N}(5, 0.7^2)$. We expected this to be an easy target for both vanilla and all MCIS estimators. We work in $d=3$ dimensions, unless stated otherwise.

\paragraph{Mixture of Gaussians target.} We consider the following mixture of two Gaussians in $d=3$ dimensions:
\[
\trm = \tfrac{1}{2}\left (\mathcal{N}(3\cdot\mathds 1, 0.7^2 I_\targdim) + \mathcal{N}(7\cdot \mathds 1, 1.5^2 I_\targdim)\right ),
\]
where $\mathds 1 \coloneqq  (1,1,1)^\intercal$.
We expected this to be an easy target for estimators based on random walk MCMC samples, but problematic when using ULA samples because of the multimodality.

\paragraph{GP regression with automatic relevance determination.} In this real-life problem, Gaussian process regression is applied to noise level prediction. 
We used data aquired by NASA through testing airfoil blade sections in an anechoic wind tunnel\footnote{available at \href{https://archive.ics.uci.edu/ml/datasets/airfoil+self-noise}{https://archive.ics.uci.edu/ml/datasets/airfoil+self-noise}}. The dataset provided six predictors for the noise level. As some of these where on a log-scale, we exponentiated them and centered the data in a preprocessing step.
A Gaussian automatic relevance determination (ARD) kernel was used, inducing individual variances for all  predictors as model parameters. Another parameter was the likelihood scale of the GP, resulting in a seven dimensional posterior.
We used independent standard normal distributions as priors on $\log(\exp(\sigma^2_*) - 1), \log(\lambda)$, where $\sigma^2_*$ are ARD variances and $\lambda$ is the likelihood scale.
Ground truth for the posterior was obtained using $100\,000$ iterations from MH with optimal scaling and the standard estimator. This was repeated three times, resulting in $300\,000$ overall samples for establishing ground truth.

In what follows, we discuss the convergence for MCIS based on unadjusted Langevin and MH chains using the different targets in detail. 

\subsection{Unadjusted Langevin MCIS}
We tested MCIS with ULA samples for both the Gaussian and the mixture of Gaussians target.
We expected the multimodality of the latter to pose a particular problem for ULA, which, however, is caused by the sampling algorithm rather than the estimator used. The time discretization parameter was set to $0.1$.
\begin{figure}[htb]
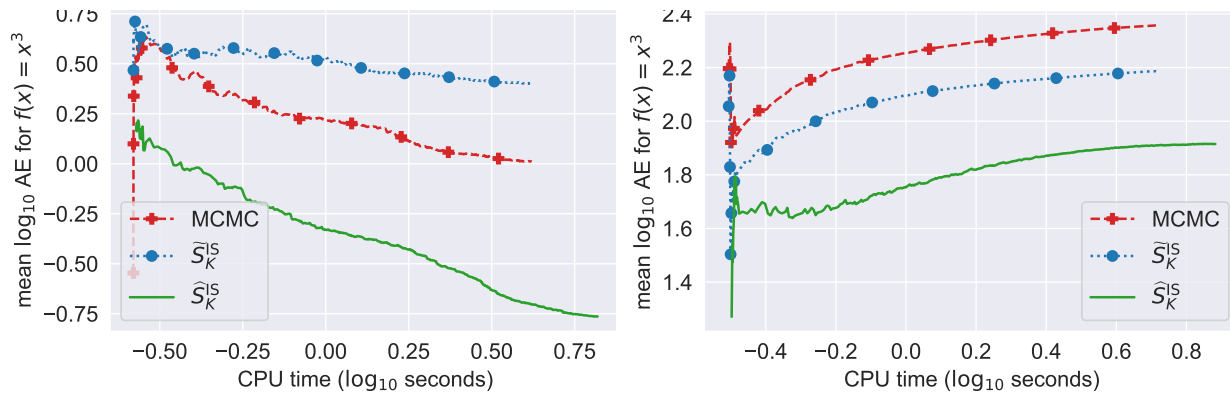

	\centering
	\trimmedplot{images/ULA_unim_0.1/x_3_logX_logY}
	\trimmedplot{images/ULA_mumultim_0.1/x_3_logX_logY}
	\caption{Unadjusted Langevin algorithm. Absolute error over CPU time for estimating $\mathbb{E}[f]$, $f(x)= \targdim^{-1}\sum_{i=1}^{\targdim} x_i^3$, with the standard estimator and the proposed estimators. \emph{Left:} Gaussian target. \emph{Right:} Mixture of Gaussians target.}
	\label{fig:DLIS}
\end{figure}
\paragraph{Gaussian target.}
As expected, the Gaussian target posed no problem whatsoever to Langevin sampling with either the vanilla or MCIS estimators.
However, S-MCIS ($\ISs_K$) failed to provide sufficient accuracy to improve over the standard estimator. 
On the other hand, MCIS improved estimation accuracy considerably.
The mean logarithmic absolute error of $\bE[f]$ is plotted over CPU time in Figure~\ref{fig:DLIS} (left).
We can observe that S-MCIS and the vanilla estimator take approximately the same amount of time for sampling and estimation combined, while MCIS takes some extra time for the same amount of samples.
However, if we keep the amount of CPU time fixed as opposed to the number of samples, MCIS still outperforms the vanilla estimator.

\paragraph{Mixture of Gaussians target.}
As expected, we observe problems of ULA when sampling a multimodal target, as all estimators have not yet entered the stage of convergence (Figure~\ref{fig:DLIS}, right).
This demonstrates that ULA is not a good choice for sampling multimodal target distributions. Consequently, recent papers on nonasymptotic analysis of ULA schemes make unimodality assumptions \citep{durmus2017nonasymptotic,dalalyan2017theoretical,cheng2017underdamped}.
However, for this difficult target the MCIS estimators very clearly improve upon the state of the art.

\subsection{Random walk MCIS}
In these experiments we used a Gaussian random walk proposal inside Metropolis--Hastings sampling and compare the standard estimator with variants of the MCIS estimator. The sampler was tuned to an acceptance rate of around $0.234$ which has been reported to be optimal for the vanilla MCMC estimator in the literature \citep{roberts2001optimal}.

\begin{figure}[htb]
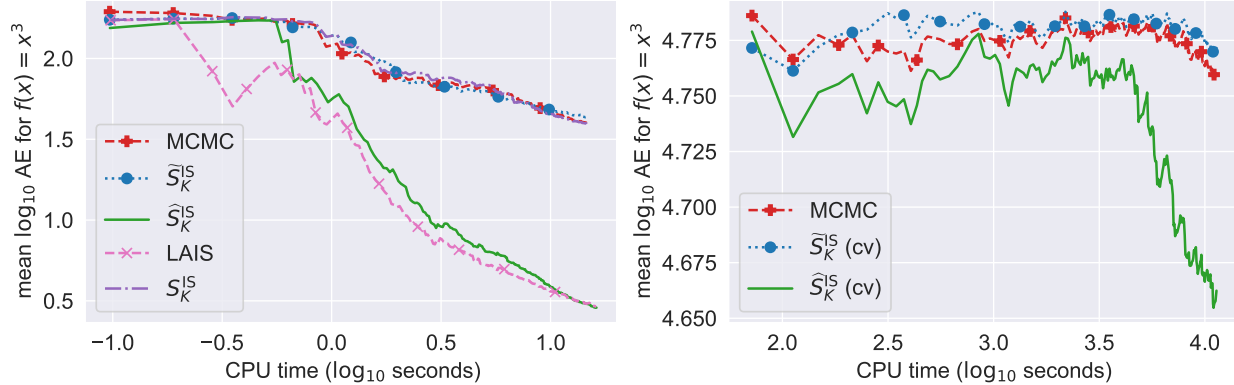

	\centering
	\trimmedplot{images/3D_20x10000_RW_multimodal_1.80_accrate_0.2392/x_3_logX_logY}
	\trimmedplot{images/GP/x_3_logX_logY}
	\caption{ Gaussian random walk MH. Absolute error over CPU time for estimating $\mathbb{E}[f]$, $f(x)= \targdim^{-1}\sum_{i=1}^{\targdim} x_i^3$, with the standard estimator and the proposed estimators. \emph{Left:} Mixture of Gaussians target. \emph{Right:} GP target.}
	\label{fig:MHIS_MoG_GP}
\end{figure}

\paragraph{Mixture of Gaussians target.}
Gaussian random walk proposals used a noise scale of $\param = 1.8$. The efficiency gains of using full MCIS are huge (Figure~\ref{fig:MHIS_MoG_GP} left). Owing to the fact that the artificial target distribution is very cheap to evaluate, LAIS and MCIS perform comparably well.

\paragraph{GP regression target.}
For this seven dimensional target, we used a second order Taylor approximation in order to estimate the posterior covariance $\Sigma$ around a MAP point.
Preliminary runs where used to find a scaling of $\Sigma$ that resulted in near-optimal acceptance rate.
Again, there are efficiency gains when using the MCIS estimator.
Since this target distribution is computationally more expensive, MCIS outperforms LAIS, as the latter requires twice as many target evaluations.
In fact, experimentally, MCIS achieves the same precision as LAIS in almost exactly half the computation time.

\section{Conclusion and outlook}
\label{section:Conclusion}
In this paper, we propose a novel type of Monte Carlo estimators when using Markov chain algorithms for sampling.
Our approach is based on the elementary observation that the proposed samples $Y_k$ of most Markov chain acceptance rejection methods (Algorithm \ref{alg:AcceptReject}) have an asymptotic density $\trda$ that is related to, but different from the target density $\trd$, and that importance sampling can be used to correct for this change of measure, resulting in our first MCIS estimator $\IS_K$ given by \eqref{equ:generalIS}.
Since the density $\trda$ is typically not available in closed form, we propose to estimate it by the Monte Carlo sum \eqref{equ:HowToEstimateTrda} and obtain our second, practically applicable MCIS estimator $\ISe_K$ given by \eqref{equ:generalISestimate}.
\\
The advantages of using MCIS are twofold.
First, it makes use of \emph{all} proposed samples $Y_k$ rather than just the accepted ones.
Since the (possibly costly) target density needs to be evaluated in each $Y_k$, MCIS thereby avoids a waste of resources and often yields a higher precision at nearly the same computational cost.
Second, the asymptotic density $\trd_X$ of the Markov chain $(X_k)$ does not need to coincide with the target density $\trd$ for MCIS to be consistent.
The application of MCIS to ULA, which samples from a different distribution than the target $\trm$, is a particular realization of this idea.
Importantly, as a by-product, we also obtain an estimator for the normalizing constant, which is an essential quantity for Bayesian model selection, model averaging and testing.
\\
We prove a law of large numbers and a central limit theorem for the MCIS estimator $\IS_K$, while the convergence properties of $\ISe_K$ are discussed less rigorously and compared to LAIS in Section \ref{section:AnalysisISe_nonrigorous}, where we also argue that it is insufficient to view $\ISe_K$ just as an approximation to $\IS_K$ and that it can outperform the latter in many cases.
\\
Usually, using the MCIS estimator does not result in a large increase in computational time, since no further target density evaluations are necessary.
We study its runtime in Appendix \ref{sec:runtime} and propose alternative realizations of the MCIS idea that have a linear runtime in Remark \ref{remark:SMCIS}, in particular the S-MCIS estimator studied by \citet{Rudolf2018}.
We have examined the full MCIS and S-MCIS estimators empirically and found full MCIS to be more CPU-efficient than the standard Monte Carlo estimator, sometimes to a large extent.
\\
Several questions are left for future work.
For one, we did not study optimal tuning of the sampler for the purpose of using MCIS subsequently.
Instead, we used the optimal acceptance rate literature for tuning guidelines \citep{roberts2001optimal}, which are tailored to the standard estimator.
Further performance gains can be expected from such a tuning.
Secondly, we believe that the problem of infinite variance that can occur in importance sampling is of no issue for MCIS rather under mild assumptions.
This intuition comes from the fact that, in the case of Gaussian random walk MH, $\trda$ will be the convolution of $\trd$ with a Gaussian kernel and thereby have heavier tails.
This insight needs to be formalized and generalized to other Markov chain acceptance rejection algorithms such as ULA and pCN \citep{cotter2013mcmc,rudolf2018generalization}.
Thirdly, an interesting direction would be to derive error bounds for finite and fixed numbers of samples as has previously been done for certain classes of target distributions and the unadjusted Langevin sampler with the vanilla estimator  \citep{durmus2017nonasymptotic,dalalyan2017theoretical,cheng2017underdamped}.
Finally, a rigorous analysis of the approximation properties of $\ISe_K$ remains an open problem, cf.\ the discussion in Section \ref{section:AnalysisISe_nonrigorous}.

\textbf{Paper contains $4600$ words in total.}
\paragraph{Acknowledgements} \unblindinfo{We thank Alain Durmus, Bj\"orn Sprungk, Daniel Rudolf and Péter Koltai for discussions about different aspects of this work.
Marcus Weber answered questions about shadow HMC.
Christian Robert pointed out related Rao--Blackwellization literature.
The comments of the anonymous reviewers strengthened this paper considerably.\\
Ingmar Schuster was financed by a PSL Research University postdoc grant and Deutsche Forschungsgemeinschaft under SFB 1114, projects A06 and B03, performing the work while at Paris Dauphine and FU Berlin.
Ilja Klebanov was financed by the Einstein Foundation Berlin, ECMath Project CH13.}{Acknowledgements}

\bibliographystyle{abbrvnat}
\bibliography{Bibliography}

\newpage

\appendix

\section{Proofs}
\label{sec:proofs}

\begin{proof}[Proof of Theorem \ref{theorem:InheritedProperties}]
	Property \eqref{theorem:InheritedProperties:stationary} follows directly from the definition of $(Z_k)$.
	Aperiodicity and irreducibility of $(Z_k)$ in \eqref{theorem:InheritedProperties:aperiodicity} follow directly since the proposal densities $\prd$ are assumed to be globally supported.
	For Harris recurrence, let $A\subseteq\mathcal B(\bR^{2d})$ with $\mu_Z(A)>0$.
	By continuity of $\prd$ there exist subsets $A_X,A_Y\in\mathcal B(\bR^d)$ and $\epsilon>0$, such that
	\begin{equation}
	\label{equ:RectangleConstruction}
	A_X\times A_Y\subseteq A,
	\qquad
	\mu_X(A_X)>0
	\qquad\text{and}\qquad
	\int_{A_Y} \prd(y|x)\, \mathrm dy > \epsilon\text{ for all }x\in A_X.
	\end{equation}
	Since $(X_k)$ is Harris positive by assumption, we have
	\[
	\bP\left[X_k\in A_X \text{ infinitely often}\right] = 1,
	\]
	regardless of the initial value $X_1$.
	Using \eqref{equ:RectangleConstruction} this implies that, almost surely, $Z_k$ will enter $A\supseteq A_X\times A_Y$ in finite time,
	\[
	\bP\left[ \min_{k>1}(Z_k\in A) < \infty \right] = 1,
	\]
	regardless of the initial value $X_1$. By \cite[Proposition 6.33]{christian2007monte} this proves \eqref{theorem:InheritedProperties:aperiodicity}.
	If $(X_k)$ is geometrically ergodic, there exists $r>1$ such that
	\[
	\sum_{m=1}^{\infty} r^m\, \|\mkx^m(x,\Cdot)-\mu\|_{\rm TV} < \infty
	\]
	for all $x\in\bR^d$.
	For any signed measure $\nu$ on $\bR^d$ we define the signed measure $\nu\odot \prd$ on $\bR^{2d}$ by
	\[
	\nu\odot \prd (A\times B) \coloneqq  \int_A\int_B \prd(y|x)\, \mathrm dy\, \mathrm d\nu(x).
	\]
	Since $\prd$ is non-negative, its Jordan decomposition $\nu\odot \prd = (\nu\odot \prd)_+ - (\nu\odot \prd)_-$ is given by
	\[
	(\nu\odot \prd)_+ = \nu_+\odot \prd,
	\qquad
	(\nu\odot \prd)_- = \nu_-\odot \prd,
	\]
	and therefore
	\[
	\|\nu\odot \prd\|_{\rm TV} = (\nu\odot \prd)_+(\bR^{2d}) + (\nu\odot \prd)_-(\bR^{2d}) = \nu_+(\bR^d) + \nu_-(\bR^d) = \|\nu\|_{\rm TV}.
	\]
	Here, we used the connection between the total variation norm and the Jordan decomposition of a measure, see e.g. \cite[Chapter 2.4]{sullivan2015introduction}.
	Since $X_2$ is either equal to $X_1$ (with probability $1-\alpha_1$) or to $Y_1$ (with probability $\alpha_1$), this implies for each $(x,y)\in\bR^{2d}$, $\alpha\coloneqq \alpha(x,y)$ and $m \ge 1$
	\begin{align*}
	\mkz^m((x,y),\Cdot)
	&=
	\left[(1-\alpha)\mkx^{m-1}(x,\Cdot) + \alpha \mkx^{m-1}(y,\Cdot)\right]\odot \prd,
	\\[0.1cm]
	\|\mkz^m((x,y),\Cdot)-\mu\odot \prd\|_{\rm TV}
	&\le
	(1-\alpha)\|\mkx^{m-1}(x,\Cdot)-\mu\|_{\rm TV} + \alpha \|\mkx^{m-1}(y,\Cdot)-\mu\|_{\rm TV}
	\end{align*}
	and thus
	\[
	\sum_{m=1}^{\infty} r^m\, \|\mkz^m((x,y),\Cdot)-\mu\odot \prd\|_{\rm TV}
	<
	\infty,
	\]
	which proves property \eqref{theorem:InheritedProperties:geom ergodicity}.
	The proof of \eqref{theorem:InheritedProperties:uniform ergodicity} goes analogously.
\end{proof}

Our proof of the CLT (Theorem \ref{theorem:CLT}) relies on a well-known version of the CLT for Markov chains, namely \citet[Theorem 2]{chan1994discussion} (see also \citealt[Corollary 2]{jones2004on}, or \citealt[Theorem 6.67]{christian2007monte}). However, we will first have to generalize this result to \emph{multivariate} quantities of interest $h$, which is the statement of the following proposition:

\begin{proposition}
\label{prop:multivariateCLT}
Let $(Z_k)_{k\in\bN}$ be  an aperiodic, irreducible, Harris positive and geometrically ergodic Markov chain in $\bR^n$ with stationary distribution $\pi$ and $h\colon\bR^n\to\bR^m$ be a measurable function such that $\bE_{Z\sim\pi} \left[ \|h(Z)\|^{2+\epsilon}\right]<\infty$ for some $\epsilon>0$.
Then $\|\gamma_h\|<\infty$, where $\gamma_h\in\bR^{m\times m}$ is given by
\[
\gamma_{h} \coloneqq  \frac{\gamma_{h}^{(1)}}{2} + \sum_{k=2}^\infty \gamma_{h}^{(k)},
\qquad
\gamma_{h}^{(k)}\coloneqq  \Cov_{Z_1\sim\pi} \left[h(Z_1),h(Z_{k})\right] + \Cov_{Z_1\sim\pi} \left[h(Z_k),h(Z_{1})\right],
\]
and
\[
G_K := \sqrt{K}\left( \frac{1}{K}\sum_{k=1}^{K} h(Z_k) - \bE_{Z\sim\pi}[h(Z)] \right)
\xrightarrow[K\to\infty]{\rm d}
\cN(0,\gamma_h).
\]
\end{proposition}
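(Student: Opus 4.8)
The plan is to reduce the multivariate statement to the univariate Markov chain CLT (Theorem 6.67 of \citealt{christian2007monte}) by means of the Cramér--Wold device. Recall that $G_K \xrightarrow{\rm d} \cN(0,\gamma_h)$ holds if and only if, for every fixed vector $t\in\bR^m$, the scalar projections satisfy $t^\top G_K \xrightarrow{\rm d} \cN(0,\, t^\top \gamma_h\, t)$, the latter being precisely the law of $t^\top$ applied to the target $\cN(0,\gamma_h)$. Hence it suffices to establish a one-dimensional CLT for each linear functional of the chain, with the correct limiting variance, and then assemble these into the joint statement.

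First I would fix $t\in\bR^m$ and introduce the scalar observable $h_t \coloneqq t^\top h\colon \bR^n\to\bR$, so that $t^\top G_K = \sqrt{K}\bigl(\tfrac1K\sum_{k=1}^K h_t(Z_k) - \bE_{Z\sim\pi}[h_t(Z)]\bigr)$. The structural hypotheses on $(Z_k)$ (aperiodic, irreducible, Harris positive, geometrically ergodic) are hypotheses on the chain alone and transfer verbatim, so what remains is to carry the moment and variance conditions over from $h$ to $h_t$. The moment condition follows from Cauchy--Schwarz: since $|h_t(z)| = |t^\top h(z)| \le \|t\|\,\|h(z)\|$, we obtain $\bE_{Z\sim\pi}[|h_t(Z)|^{2+\epsilon}] \le \|t\|^{2+\epsilon}\,\bE_{Z\sim\pi}[\|h(Z)\|^{2+\epsilon}] < \infty$.

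Next I would identify the scalar asymptotic variance. Using bilinearity of covariance, the lag-$k$ autocovariance of $(h_t(Z_k))$ is $\Cov_{Z_1\sim\pi}[h_t(Z_1), h_t(Z_k)] = t^\top \gamma_k\, t$, so the univariate asymptotic variance equals $t^\top\gamma_1 t + 2\sum_{k=2}^\infty t^\top\gamma_k t = t^\top\bigl(\gamma_1 + 2\sum_{k=2}^\infty \gamma_k\bigr) t = t^\top \gamma_h\, t$, where the series converges to a finite quantity because $\|\gamma_h\|<\infty$. Invoking the univariate CLT then yields $t^\top G_K \xrightarrow{\rm d} \cN(0,\, t^\top \gamma_h\, t)$. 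Since $t$ was arbitrary, the Cramér--Wold device upgrades these one-dimensional limits to $G_K \xrightarrow{\rm d} \cN(0,\gamma_h)$.

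I do not expect a deep obstacle; the argument is essentially bookkeeping that the univariate theorem is applicable coordinatewise. The one point requiring genuine care is the identification of the scalar limiting variance with $t^\top\gamma_h t$: one must confirm that the linear projection commutes with the summation defining $\gamma_h$ and that the resulting scalar series converges (which is exactly what $\|\gamma_h\|<\infty$ provides). A minor edge case is the degenerate situation $t^\top\gamma_h t = 0$, where the limiting normal collapses to a point mass at the origin; convergence in distribution to this degenerate law still holds, so the Cramér--Wold conclusion is unaffected.
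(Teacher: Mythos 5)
Your proposal is correct and follows essentially the same route as the paper's own proof: both reduce to the univariate Markov chain CLT (Theorem 6.67 of \citealt{christian2007monte}) via the Cram\'er--Wold device, verifying the projected moment condition and identifying the scalar asymptotic variance as $t^\top \gamma_h\, t$ through bilinearity of covariance. Your additional remarks (the Cauchy--Schwarz bound for the $(2+\epsilon)$-moment and the degenerate case $t^\top\gamma_h t = 0$) are sound details the paper leaves implicit.
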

\begin{proof}
Let $v\in\bR^m$ and $h_{v} = v^{\top}h\colon \bR^{n}\to\bR$.
The above conditions imply $\bE_{Z\sim\pi} \left[ |h_{v}(Z)|^{2+\epsilon}\right]<\infty$, hence \citet[Theorem 2]{chan1994discussion} yields
$0 \le \gamma_{h_{v}} < \infty$ and
\begin{equation}
\label{equ:ProjectedConvergenceInDistribution}
v^{\top} G_K
=
\sqrt{K}\left( \frac{1}{K}\sum_{k=1}^{K} h_{v}(Z_k) - \bE_{Z\sim\pi}[h_{v}(Z)] \right)
\xrightarrow[K\to\infty]{\rm d}
\cN(0,\gamma_{h_{v}}).
\end{equation}
Note that $\gamma_{h_{v}}^{(k)} = v^\top \gamma_{h}^{(k)} v$ and thereby $\gamma_{h_{v}} = v^\top \gamma_{h} v$.
Since $v\in\bR^{m}$ was chosen arbitrary and $\gamma_{h}$ is symmetric (for each partial sum), this implies $\|\gamma_{h}\| < \infty$.
Since $G_K$ converges in distribution to $H\sim \cN(0,\gamma_h)$ if and only if
\[
v^{\top} G_K\xrightarrow[K\to\infty]{\rm d}v^{\top}H \sim \cN(0,v^{\top}\gamma_h v),
\]
for each $v\in\bR^m$, \eqref{equ:ProjectedConvergenceInDistribution} proves the claim.
\end{proof}

\begin{proof}[Proof of Theorems \ref{theorem:LLN} and \ref{theorem:CLT}]
	The ergodic theorem (see e.g.\ \citealt[Theorem 6.63]{christian2007monte}) yields for the numerator and denominator of \eqref{equ:generalIS}:
	\begin{align*}
	\overline\phi_K
	\coloneqq \ 
	&\frac{1}{K}\sum_{k=1}^{K}\phi(Y_k)
	\xrightarrow{\rm a.s.}
	\bE_{Y\sim \trda}\left[\frac{\targfunc\trd}{\trda}(Y)\right]
	=
	\int \targfunc(y)\, \trd(y)\, \mathrm dy
	=
	\evid\, \bE_\trm[\targfunc],
	\\
	\overline w_K
	\coloneqq \ 		
	&\frac{1}{K}\sum_{k=1}^{K}w (Y_k)
	\xrightarrow{\rm a.s.}
	\bE_{Y\sim \trda}\left[\frac{\trd}{\trda}(Y)\right]
	=
	\int \rho(y)\, \mathrm dy
	=
	\evid.
	\end{align*}
	This already proves the LLN for $\IS_K(\targfunc) = \overline \phi_K/\overline w_K$ (Theorem \ref{theorem:LLN}).
	Since $(Z_k)$ is aperiodic, irreducible, Harris positive and geometrically ergodic by Theorem~\ref{theorem:InheritedProperties}, Proposition~\ref{prop:multivariateCLT} yields
	\[
	\sqrt{K}\left(
	\begin{pmatrix} \overline \phi_K \\ \overline w_K \end{pmatrix}
	-
	\begin{pmatrix} \evid\bE_\trm[\targfunc] \\ \evid \end{pmatrix}
	\right)	
	\xrightarrow{\rm d}
	\cN(0,\gamma_h).
	\]
	By applying the delta method \cite[Theorem 11.2.14]{lehmann2006testing} with
	\[
	g(u,v) = \frac{u}{v},
	\qquad
	\nabla g \big(\evid\bE_\trm[\targfunc],\evid\big)
	=
	\evid^{-1}
	\begin{pmatrix}
	1 \\ -\bE_\trm[\targfunc]
	\end{pmatrix}
	\]
	we obtain the stated CLT for $\IS_K(\targfunc) = \overline \phi_K/\overline w_K$.
\end{proof}

\section{Control Variates}
\label{section:ControlVariates}

We can add a control variate to the random variable $\prd(y|X_k)$ for all $k$ in order to reduce the variance of the Monte-Carlo estimate
\[
\trdae(y) = \frac{1}{K}\sum_{k=1}^{K} \prd(y|X_k) \approx \trda(y).
\]
The advantage in the particular combination with our MCIS estimator is that using a control variate to improve the estimate of $\trda$ will improve the final integral estimation  \emph{independent} of the particular test function. This is a strong difference to the standard application of control variates, which necessitates additional computation for each new integrand.
To this end, we replace $\prd(y|X)$ by 
\begin{equation}
\label{equ:ControlVariateGeneral}
\tilde \prd(y|X,X') = \prd(y|X) - c\, \big(\prd(y|X) - \prd(y|X')\big),
\end{equation}
where $X'$ is the successor of $X$ in the Metropolis Hastings algorithm, $c\in\bR$ is a constant and the control variate has mean zero, $\bE [\prd(y|X) - \prd(y|X')]=0$ for each $y\in\bR^d$, as $X$ and $X'$ have the same distribution (under the assumption that $X \sim \trm$).

In what follows, let $y\in\bR^d$ be fixed. It is well-known \cite[Chapter 4.1]{glasserman2013monte} that the optimal coefficient $c^* = c^*(y)$ is given by
\[
c^*
=
\frac{\Cov \big[\prd(y|X), \prd(y|X) - \prd(y|X')\big]}{\bV \big[\prd(y|X) - \prd(y|X')\big]}
=
\frac{\bE\big[\prd(y|X)^2 - \prd(y|X)\prd(y|X')\big]}{\bE\big[(\prd(y|X) - \prd(y|X'))^2\big]},
\]
which can easily be approximated by another Monte Carlo estimate:
\[
c^*
\approx
\frac{\sum_{k=1}^{K-1} \prd(y|X_k)^2 - \prd(y|X_k)\prd(y|X_{k+1})}{\sum_{k=1}^{K-1} (\prd(y|X_k)-\prd(y|X_{k+1}))^2}.
\]
Note that for each evaluation point $y\in\bR^d$ (in our case $y = Y_1,\dots,Y_K$), $c^* = c^*(y)$ has to be computed \emph{separately}.
Nevertheless, the new estimator requires only little additional computational effort. This is because $\prd(y|X_k)$ has to be evaluated anyway for each $k=1,\dots,K$ and $y = Y_1,\dots,Y_K$ for the overall MCIS estimator $\ISe_K$ and no new target density or test function evaluations are necessary.

The new estimator \eqref{equ:ControlVariateGeneral} no longer guarantees non-negativity of the density estimate $\trdae$. This can be avoided by adding a control variate to the random variable $\ell(y|X)=\log \prd(y|X)$ instead of $\prd(y|X)$,
\begin{equation*}
\label{equ:LogControlVariateGeneral}
\tilde \ell(y|X,X') = \ell(y|X) - c\, \big(\ell(y|X) - \ell(y|X')\big).
\end{equation*}
The additional computation effort is negligible, as $\log \prd(y|X)$ has to be computed for the full MCIS estimator anyway.
Of course, other control variates are possible and more than one control variate can further reduce variance.
However, it is unclear under what circumstances control variates actually reduce variance and leave in-depth analysis of this question to future work.

\section{Code}

\lstset{ 
	backgroundcolor=\color{white},   
	basicstyle=\footnotesize,}
This appendix contains code for the CPU-intensive computation of $\trdae(Y_i)$ for all $Y_i$ in the Python programming language.
\lstinputlisting[language=Python]{code/location_mixture.py}

\section{Effects of proposal scaling}
\label{sec:Effects of proposal scaling}
We used the unimodal Gaussian target in $10$ and $30$ dimensions to test for the effects of different scalings of a Gaussian random walk proposal in MH. We started with a very peaked Gaussian random walk proposal which resulted in an acceptance probability close to $1$. Subsequently, we used a heuristic to decrease the acceptance rate down to $0$ by incrementally scaling up the bandwidth of the Gaussian proposal, so that the space between acceptance rates $0$ and $1$ was covered. At each proposal scaling, we ran $10$ independent MH samplers for $5000$ iterations. Absolute errors for an integral estimate were computed for each run, ordered by acceptance rate of the run, and smoothed by a rolling average. For the result, see Figure~\ref{fig:MCIS_increasing_scale_experiment}. \\
This reveals that there is indeed a scaling regime in which $\ISe_K$ outperforms the standard MCMC estimator for the $10$ dimensional target. Interestingly, $\IS_K$ always performs slightly better than the standard estimator. Even when increasing the dimension of the Gaussian target distribution, $\IS_K$ often improved over the standard estimator. This was not true for $\ISe_K$ however, which performs worse for the unimodal Gaussian in $30$ dimensions. For the mixture of Gaussians target in high dimensions we could still observe an advantage when using $\ISe_K$.

\begin{figure}
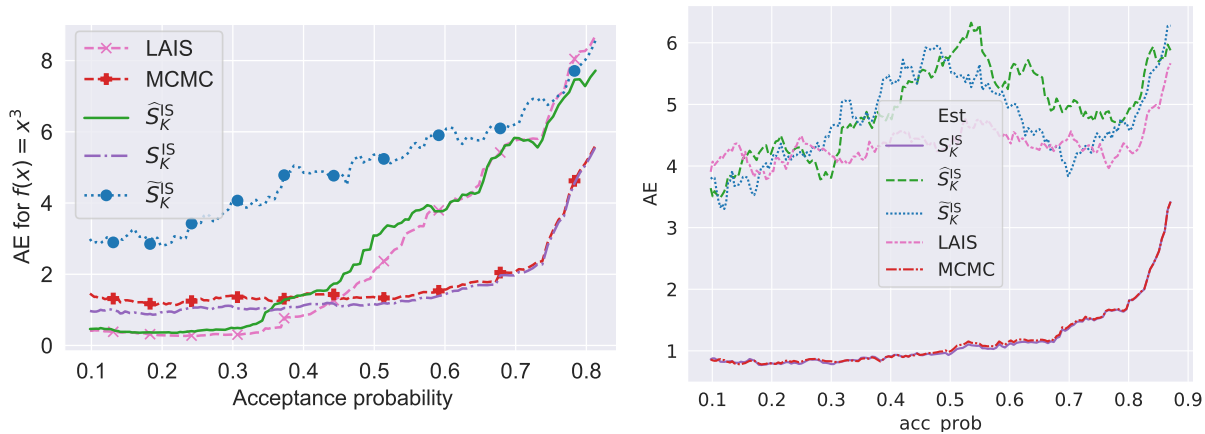

	\centering
	\trimmedplot{images/10x5000_RW_unimodal_10D_incScale/x_3_inc}
	\trimmedplot{images/10x10000_RW_unimodal_30D_incScale/x_3_inc}	
	\caption{Effects of proposal scaling. \emph{Left}: unimodal Gaussian in $10$d. \emph{Right:} unimodal Gaussian in $30$d.}
	\label{fig:MCIS_increasing_scale_experiment}
\end{figure}

\section{Runtime considerations}
\label{sec:runtime}
Let $\alpha \in [0,1]$ be the overall acceptance rate and $c_\targfunc$, $c_\prdunparam$, $c_\trd$ denote the computational cost of evaluating the test function $\targfunc$, the proposal density $\prd$ and the target density $\trd$, respectively, at a single point.
Then, using MCIS instead of the vanilla MH estimator increases the computational cost by a factor of
\begin{equation}
\label{eq:prolongation}
1+\frac{(1-\alpha )\nsmp c_\targfunc+\alpha\nsmp^2 c_\prdunparam}{\alpha \nsmp c_\targfunc+\nsmp c_\trd + 2\nsmp c_\prdunparam}\, .
\end{equation}
In other words, when the evaluation of the target density is computationally costly and $c_\trd$ is large, the relative amount of extra time for using MCIS instead of the vanilla estimator becomes small, and indeed goes to zero as $c_\trd$ increases as long as $\nsmp, \alpha, c_\targfunc$ and $c_\prdunparam$ remain fixed. For example, in Bayesian inference an increasing dataset size implies an increase of $c_\trd$, as evaluating the target involves evaluating the likelihood for each data point.

Using all samples for estimating $\trda$ makes the estimator $\ISe_K$ quadratic in computational complexity.
If the quadratic runtime is of concern, one possible remedy is S-MCIS or a middle ground of using some fixed number $j$ of samples for estimating $\trda$ with $1<j<K$, retaining the linear runtime (in $K$) of the vanilla MH estimator.
However, we obtained a very CPU-efficient estimator by choosing a completely vectorized approach to implement full MCIS.\\
To get an intuition for the prolongation factor, observe that convergence plots for MCIS with synthetic targets (see e.g.\ Figure~\ref{fig:DLIS}) end at higher CPU time values, which reflects the fact that the prolongation factor \eqref{eq:prolongation} is strictly larger than one.
The good performance of full MCIS  in this case is especially remarkable since evaluating the target was very cheap ($c_\trd$ was small).
The GP target on the other hand arises from an actual Bayesian posterior and the cost of evaluating the target is considerably higher than for the synthetic target experiments ($c_\trd$ is large compared to $c_\prdunparam$).
This is noticeable in the plots, as the CPU time for all estimators is almost the same for the same number of samples.
In other words, the prolongation factor is close to one.
Of course, the runtime for full MCIS increases dramatically if the code is not vectorized.

\section{Additional test functions}
\label{sec:Additional plots}
In this section we provide convergence plots for additional test functions.
The functions used were $f(x) = \targdim^{-1}\sum_{i=1}^{\targdim} \varphi(x_i)$ with $\varphi(z) = z$, $\varphi(z) = z^2$ and $\varphi(z) = \exp(z)$.

\begin{figure}[ht]
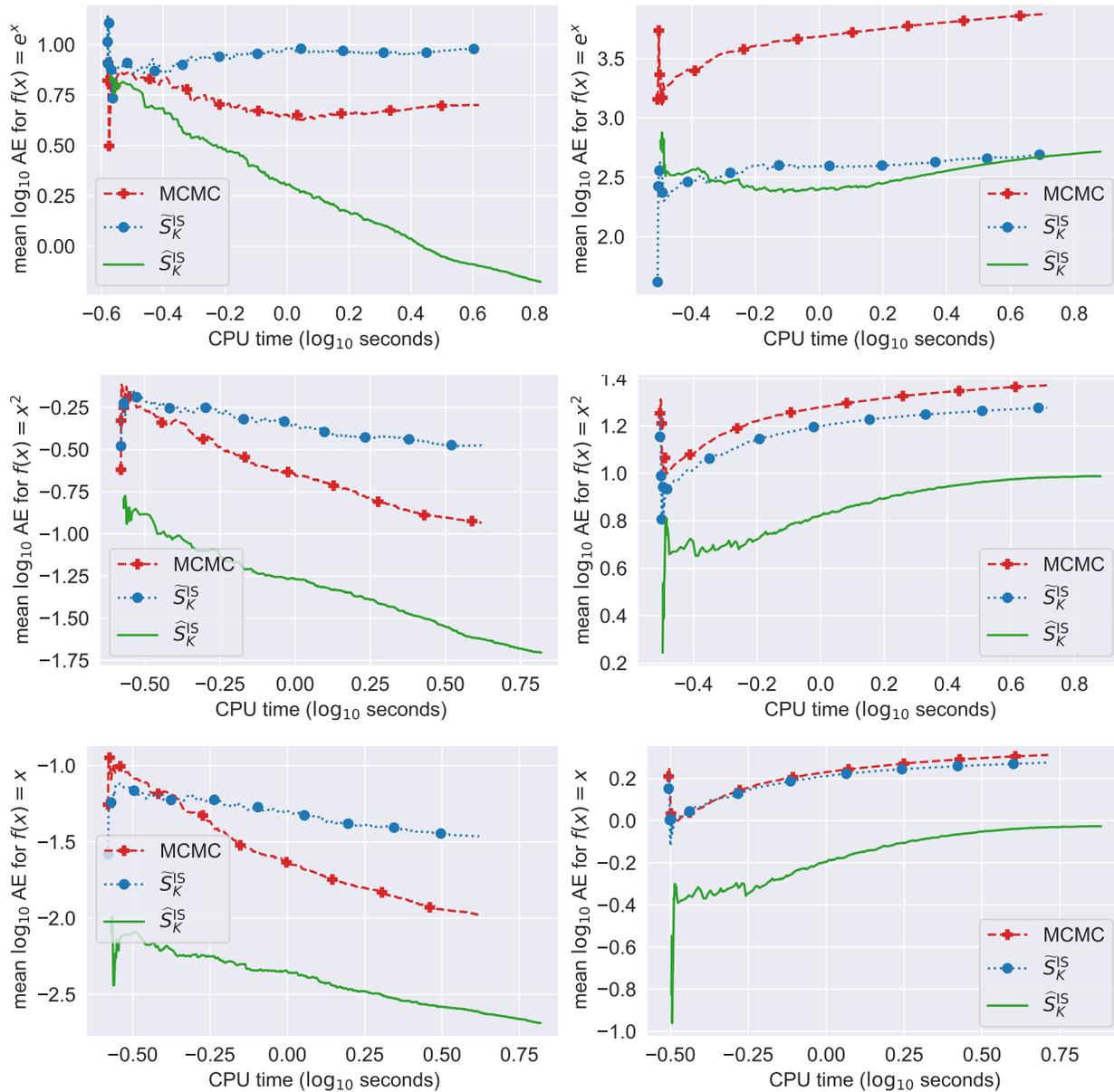

	\centering
	\trimmedplot{images/ULA_unim_0.1/e_x_logX_logY}
	\trimmedplot{images/ULA_mumultim_0.1/e_x_logX_logY}
	\trimmedplot{images/ULA_unim_0.1/x_2_logX_logY}
	\trimmedplot{images/ULA_mumultim_0.1/x_2_logX_logY}
	\trimmedplot{images/ULA_unim_0.1/x_logX_logY}
	\trimmedplot{images/ULA_mumultim_0.1/x_logX_logY}
	\caption{Unadjusted Langevin. Absolute error over CPU time for estimating expectations of several test functions with the standard estimator and the proposed estimators. Errors averaged across dimensions and $20$ independent MCMC repetitions. \emph{Left:} Gaussian target. \emph{Right:} Mixture of Gaussians target.}
	\label{fig:DLIS additional}
\end{figure}

\begin{figure}[ht]
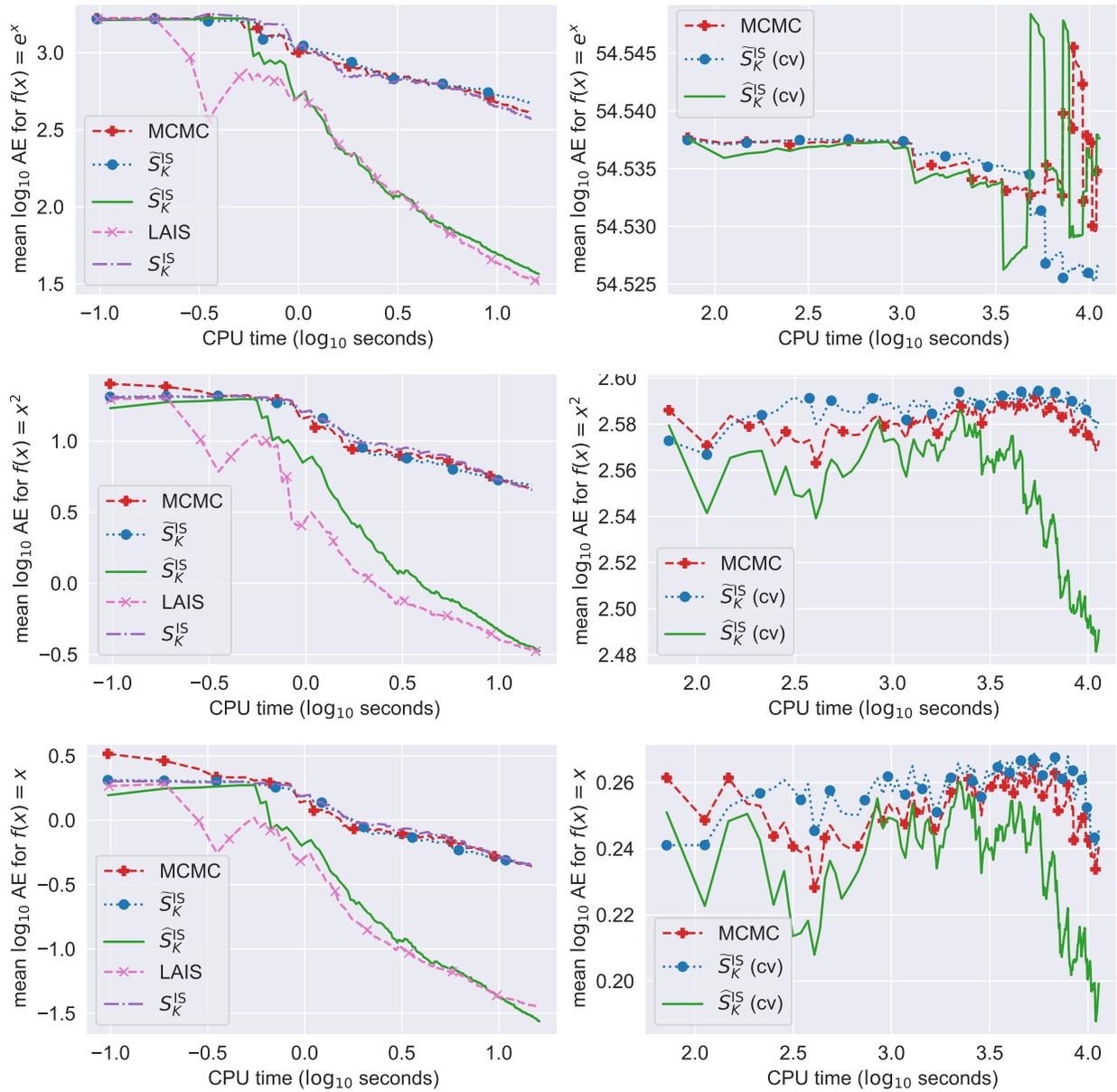

	\centering
	\trimmedplot{images/3D_20x10000_RW_multimodal_1.80_accrate_0.2392/e_x_logX_logY}
	\trimmedplot{images/GP/e_x_logX_logY}
	\trimmedplot{images/3D_20x10000_RW_multimodal_1.80_accrate_0.2392/x_2_logX_logY}
	\trimmedplot{images/GP/x_2_logX_logY}
	\trimmedplot{images/3D_20x10000_RW_multimodal_1.80_accrate_0.2392/x_logX_logY}
	\trimmedplot{images/GP/x_logX_logY}
	\caption{ Gaussian random walk MH. Absolute error over CPU time for estimating expectations of several test functions with the standard estimator and the proposed estimators. Errors averaged across dimensions and $20$ independent MCMC repetitions. \emph{Left:} Mixture of Gaussians target. \emph{Right:} GP target.}
	\label{fig:MHIS_MoG_GP additional}
\end{figure}

\end{document}